\documentclass{article} % For LaTeX2e
\usepackage{iclr2026_conference,times}

\usepackage{amsmath,amsfonts,bm}
\usepackage{amsmath}
\usepackage{amssymb}
\usepackage{amsmath, amssymb, amsthm}
\usepackage{color}

\usepackage{graphicx} 
\usepackage{hyperref}
\usepackage{algorithm}
\usepackage{algorithmic}
\newtheorem{assumption}{Assumption}
\newtheorem{theorem}{Theorem}

\def\eqref#1{equation~\ref{#1}}
\def\1{\bm{1}}

\DeclareMathAlphabet{\mathsfit}{\encodingdefault}{\sfdefault}{m}{sl}
\SetMathAlphabet{\mathsfit}{bold}{\encodingdefault}{\sfdefault}{bx}{n}

\newcommand{\E}{\mathbb{E}}

\usepackage{hyperref}
\usepackage{url}
\usepackage{amsmath,amssymb,amsthm}
\usepackage{graphicx}
\usepackage{booktabs}
\usepackage{float}
\usepackage{algorithm}
\usepackage{algorithmic}
\newcommand{\EE}{\mathbb{E}}
\newcommand{\norm}[1]{\left\|#1\right\|}

\newcommand{\tr}{\text{Tr}}

\newtheorem{corollary}{Corollary}

\title{Subspace-Constrained Federated Learning with Low-Rank Adaptation}

\author{Neranjan Senarath \\
Department of ECSE\\
Rensselaer Polytechnic Institute\\
Troy, NY 12180, USA \\
\texttt{senarh@rpi.edu} 
\And
Rohit Muralitharan \\
Department of Computer Science \\
Rensselaer Polytechnic Institute\\
Troy, NY 12180, USA \\
\texttt{muralr@rpi.edu} 
\AND
Sadia Asif \\
Department of Computer Science \\
Rensselaer Polytechnic Institute\\
Troy, NY 12180, USA \\
\texttt{asifs@rpi.edu} \\
}
\iclrfinalcopy

\begin{document}

\maketitle

\begin{abstract}
Federated low-rank adaptation methods are attractive for fine-tuning large models under communication
and privacy constraints, but heterogeneous client data can induce geometric misalignment between local
low-rank updates. We study whether this subspace misalignment leads to destructive aggregation and
slower convergence in LoRA-based federated learning. We propose a subspace-regularized federated LoRA
objective that encourages local client updates to remain close to a shared global reference subspace.
We present a complete empirical evaluation on two pretrained models---RoBERTa-large and
SmolLM-360M---over HellaSwag in a non-IID 10-client federated setting, across 3 random seeds
(42, 43, 44), yielding 24 total experimental runs (4 methods $\times$ 3 seeds $\times$ 2 models).
On RoBERTa-large, Subspace-Reg achieves the strongest mean best accuracy ($0.454 \pm 0.023$),
mean final accuracy ($0.429 \pm 0.011$), and lowest final loss ($1.363$) across all three seeds,
outperforming FedAvg, SVD redistribution, and FedSVD baselines by a large margin.
On SmolLM-360M, FedAvg leads on accuracy, revealing that accuracy gains are model-dependent.
Crucially, Subspace-Reg achieves near-perfect basis overlap (${\approx}0.9999$) on \emph{both}
models and across all seeds, versus $0.958$--$0.991$ for all baselines, providing robust support
for the geometric alignment hypothesis.
The code is publicly available at \href{https://github.com/sadia-sigma-lab/Subspace-Constrained-Federated-learning-with-Lora.git}{GitHub repository}.
\end{abstract}

\section{Introduction}

Federated Learning (FL) enables collaborative model training across decentralized clients without sharing raw data, thereby preserving privacy and reducing communication overhead \citep{mcmahan2017communication,karimireddy2020scaffold}. 
Among existing approaches, Federated Averaging (FedAvg) is the most commonly used optimization paradigm, relying on periodic averaging of local model updates. 
However, in realistic federated environments where client data distributions are heterogeneous, FedAvg is known to suffer from instability and slow convergence \citep{li2020fedprox}. 
This statistical heterogeneity often leads to update drift across clients, which can degrade global model performance.

With the emergence of large foundation models, parameter-efficient fine-tuning techniques such as Low-Rank Adaptation (LoRA) \citep{hu2022lora} and its extension \citep{dettmers2023qlora} have become increasingly popular in FL. 
LoRA constrains weight updates to a low-rank decomposition $\Delta W = BA$, significantly reducing communication and computational cost while preserving adaptation capacity. 
Recent works have explored integrating LoRA into FL frameworks to enable scalable fine-tuning of large language models \citep{bai2024flexlora, lee2025fedsvd, chen2025convergence}. 
These methods primarily focus on different aggregation strategies, including sum-product (SP), product-sum (PS), and SVD-based redistribution schemes.

In this project, we investigate a phenomenon that we hypothesize to be important but relatively underexplored: \emph{subspace misalignment}. 
Under heterogeneous data distributions, different clients can learn low-rank updates that span distinct subspaces, which may reduce the effectiveness of naive aggregation when those updates are combined at the server \citep{lee2025fedsvd, bai2024flexlora}. 
When aggregated na\"{i}vely, these updates may partially cancel each other, resulting in what we refer to as \textbf{silent cancellation}. 
Even when individual clients make substantial local progress, the aggregated global update may have reduced magnitude due to geometric misalignment between client subspaces; related observations about cancellation and reduced effective progress under federated heterogeneity have also been discussed in recent work \citep{mahla2024exploring, li2020fedprox}. 
We hypothesize that this structural cancellation contributes to slow convergence in LoRA-based FL.

To address this issue, we propose a \textbf{Subspace-Regularized LoRA} framework. 
Our idea is to introduce a proximal alignment penalty that encourages each client's local low-rank basis to stay close to a shared global reference subspace. 
Specifically, we incorporate a Frobenius-norm proximal term that constrains local updates to remain close to the server's reference update, while optionally regularizing the basis matrices themselves. 
Intuitively, this encourages constructive aggregation rather than destructive interference, in the same spirit that FedProx uses proximal regularization to control client drift in standard federated optimization \citep{li2020fedprox}.

Conceptually, our approach extends the idea of FedProx \citep{li2020fedprox}, which controls update magnitude drift, to the geometric structure of low-rank subspaces. 
Instead of only restricting how far local parameters move, we attempt to regulate the \emph{direction} of movement in the low-rank manifold. 
Through this project, we aim to both empirically and theoretically evaluate whether subspace alignment can improve convergence stability and aggregation efficiency in heterogeneous federated settings.

\section{Background and Related Work}

Federated optimization under heterogeneity has been extensively studied through methods such as FedAvg \citep{mcmahan2017communication} and FedProx \citep{li2020fedprox}. FedAvg is simple and communication-efficient, but it can suffer from client drift and unstable convergence when local objectives differ substantially. FedProx addresses this issue through a proximal regularizer on local client objectives, effectively constraining each client update to stay closer to the current global model.

In large-model fine-tuning, LoRA \citep{hu2022lora} has emerged as a widely used parameter-efficient adaptation technique. By restricting $\Delta W$ to a low-rank product $BA$, LoRA significantly reduces trainable parameters and communication costs. Recent federated LoRA approaches such as FlexLoRA \citep{bai2024flexlora}, FedSVD \citep{lee2025fedsvd}, and related convergence analyses \citep{chen2025convergence} show that the choice of aggregation strategy matters greatly. However, these methods mainly focus on update parameterization or redistribution, and less explicitly on the geometry of client subspaces under heterogeneity.

Our work is motivated by the view that low-rank updates can interfere destructively when their learned subspaces are poorly aligned, which is closely related to the broader problem of client drift and ineffective aggregation under federated heterogeneity \citep{li2020fedprox, mahla2024exploring}. This motivates a subspace-aware regularization mechanism that explicitly encourages constructive aggregation.

\subsection*{Proposed Objective}
Each client $k$ solves the following subspace-regularized objective:
\vspace{-0.4em}

\[
L_{k}^{\text{SR}}(w_k) 
= 
\min_{A_k, B_k} 
\left\{
L_k(w_k)
+ 
\frac{\mu}{2} \|B_k A_k - \Delta W_g^{(t)}\|_F^2
+ 
\frac{\lambda}{2}\left(\|A_k\|_F^2 + \|B_k\|_F^2\right)
\right\},
\]
\vspace{-1.4em}

where $\Delta W_g^{(t)}$ denotes the current global low-rank update, $\mu$ controls the strength of subspace alignment, and $\lambda$ regularizes the low-rank factors.

\subsection*{Proposed Algorithm}

The overall training procedure follows a standard federated learning structure, with additional subspace regularization and SVD-based global reference updates. The high-level algorithm is summarized below.
\begin{algorithm}[H]
\caption{}
\label{alg:fedprox-ost-compact}
\begin{algorithmic}[1]
\REQUIRE $w^{(0)}, B_g^{(0)} \in \mathbb{R}^{m \times r}, A_g^{(0)} \in \mathbb{R}^{r \times n}, \mu, \alpha, \eta, E, K$
\STATE \textbf{Initialize:} $w^{(0)}$ with orthonormal $B_g^{(0)}$ ($B_g^{(0)\top} B_g^{(0)} = I_r$)
\FOR{$t = 0$ to $T-1$}
    \STATE \textbf{Server:}
    \STATE \quad Sample $K$ clients $\mathcal{S}_t$
    \STATE \quad $[B_g^{(t)}, \Sigma, V^\top] \gets \operatorname{SVD}\left( \sum_{k \in \mathcal{S}_{t-1}} p_k B_k^{(t-1)} A_k^{(t-1)} \right)$
    \STATE \quad Broadcast $w^{(t)}$, $B_g^{(t)}$ to $\mathcal{S}_t$
    
    \STATE \textbf{Clients:} \quad \textbf{for each} $k \in \mathcal{S}_t$ \textbf{do in parallel}
        \STATE \quad $B_k \gets B_g^{(0)}$, $A_k \gets A_g^{(0)}$
        \STATE \quad \textbf{for} $e = 1$ to $E$ \textbf{do}
            \STATE \quad $\displaystyle A_k \gets A_k - \eta_l \nabla_{A_k} L_k^{\text{OST}}$
            \STATE \quad $\displaystyle B_k \gets B_k - \eta_l \nabla_{B_k} L_k^{\text{OST}}$
            \STATE \quad where $L_k^{\text{OST}} = L_k(w^{(t)} + B_k A_k) + \frac{\mu}{2}\|B_k A_k - B_g^{(t)} A_g^{(t)}\|_F^2 + \frac{\alpha}{2}\|B_k - B_g^{(t)}\|_F^2$
        \STATE \quad \textbf{end for}
        \STATE \quad Send $\Delta W_k = B_k A_k$ to server
    \STATE \textbf{end for}
    
    \STATE \textbf{Server:}
    \STATE \quad $\Delta W_{\text{global}} \gets \sum_{k \in \mathcal{S}_t} p_k \Delta W_k$
    \STATE \quad $w^{(t+1)} \gets w^{(t)} + \eta \Delta W_{\text{global}}$
\ENDFOR
\RETURN $w^{(T)}$
\end{algorithmic}
\end{algorithm}

\section{Theoretical Analysis}

In addition to empirical evaluation, we conduct a theoretical analysis of the proposed method. 
Our goal is to understand how subspace alignment influences convergence behavior under heterogeneous data distributions.

Specifically, we aim to:

\begin{itemize}
    \item Analyze how misaligned low-rank updates can reduce the norm of the aggregated update through geometric cancellation.
    \item Derive convergence guarantees under standard assumptions (e.g., smoothness and bounded heterogeneity and dissimilarity), comparing our method to LoRA-FedAvg.
    \item Characterize the role of the alignment parameter $\mu$ in controlling directional drift in the low-rank manifold.
\end{itemize}

\subsection{Assumptions}

We make the following standard assumptions:

\begin{assumption}[Smoothness]
Each local loss function $L_k$ is $L$-smooth:
\begin{equation}
\norm{\nabla L_k(w_1) - \nabla L_k(w_2)} \leq L\norm{w_1 - w_2}, \quad \forall w_1, w_2
\end{equation}
\end{assumption}
This assumption controls the rate of change of the gradient and is common in non-convex optimization.

\begin{assumption}[Bounded Heterogeneity]
The gradient dissimilarity is bounded:
\begin{equation}
\norm{\nabla L_k(w) - \nabla L(w)}^2 \leq \sigma^2, \quad \forall w, k
\end{equation}
where $L(w) = \frac{1}{K}\sum_{k=1}^K L_k(w)$ is the global loss. The constant $\sigma^2$ quantifies the level of data heterogeneity across clients.
\end{assumption}

\begin{assumption}[Bounded Variance]
The stochastic gradient variance is bounded:
\begin{equation}
\EE\norm{\tilde{\nabla}L_k(w) - \nabla L_k(w)}^2 \leq \xi^2
\end{equation}
\end{assumption}
This captures the inherent noise of local stochastic optimization.

\begin{assumption}[Bounded Low-Rank Updates]
For all clients $k$, the low-rank factors satisfy:
\begin{equation}
\norm{A_k}_F \leq R_A, \quad \norm{B_k}_F \leq R_B
\end{equation}
\end{assumption}
where $r \ll \min(m,n)$ is the rank. This ensures that the low-rank updates remain in a compact set, which is naturally enforced by the Frobenius regularization terms in the objective.

\begin{theorem}[Geometric Cancellation Bound]
Let $\{\Delta W_k = B_kA_k\}_{k=1}^K$ be low-rank updates from $K$ clients, and let $\Delta W_g = \sum_{k=1}^K p_k \Delta W_k$ be the aggregated update with weights $\sum_k p_k = 1$, $p_k \geq 0$.
The squared norm of the aggregated update satisfies:

\begin{equation}
\norm{\Delta W_g}_F^2 = \sum_{k=1}^K p_k^2 \norm{\Delta W_k}_F^2 + \sum_{i \neq j} p_i p_j \tr(\Delta W_i^\top \Delta W_j)
\end{equation}

Furthermore, under heterogeneous data distributions:

\begin{equation}
\mathbb{E}[\norm{\Delta W_g}_F^2] \leq \sum_{k=1}^K p_k^2 \norm{\Delta W_k}_F^2 + \sum_{i \neq j} p_i p_j \norm{\Delta W_i}_F \norm{\Delta W_j}_F 
\end{equation}
\end{theorem}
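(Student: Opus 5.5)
The plan is to prove the identity by expanding the Frobenius norm through the trace inner product, and then to obtain the inequality by bounding each cross term with Cauchy--Schwarz.

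\textbf{The identity.} Write $\norm{\Delta W_g}_F^2 = \tr(\Delta W_g^\top \Delta W_g)$. Substituting $\Delta W_g = \sum_k p_k \Delta W_k$ and using bilinearity of $(X,Y)\mapsto \tr(X^\top Y)$ gives
\[
\norm{\Delta W_g}_F^2 = \sum_{i,j} p_i p_j \tr(\Delta W_i^\top \Delta W_j).
\]
I will split this double sum into two parts. The diagonal terms $i=j$ give $p_k^2\tr(\Delta W_k^\top\Delta W_k)=p_k^2\norm{\Delta W_k}_F^2$. The off-diagonal terms $i\neq j$ are exactly the cross sum in the statement. This requires no assumptions, and the low-rank structure $\Delta W_k=B_kA_k$ is not used.

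\textbf{The inequality.} For each off-diagonal term, I will apply Cauchy--Schwarz for the Frobenius inner product:
\[
\tr(\Delta W_i^\top\Delta W_j)\le \norm{\Delta W_i}_F\norm{\Delta W_j}_F .
\]
Since $p_ip_j\ge 0$, multiplying through by $p_ip_j$ preserves the inequality. Summing over $i\neq j$ and adding the diagonal part yields a pointwise bound for every realization of the client updates.

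\textbf{Interpreting the expectation.} The right-hand side of the second display is written with the realized norms $\norm{\Delta W_k}_F$. I will therefore read the statement as conditional on these norms, or equivalently treat them as fixed. Taking expectations of the pointwise bound then gives the claim by monotonicity of expectation.

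If one instead wants a fully unconditional statement, there are two options. One is to replace the right-hand side by its expectation, bounding $\EE[\norm{\Delta W_i}_F\norm{\Delta W_j}_F]$ via Cauchy--Schwarz in $L^2$. The other is to invoke Assumption 4, which gives $\norm{\Delta W_k}_F\le\norm{B_k}_F\norm{A_k}_F\le R_AR_B$ by submultiplicativity, and so a deterministic bound $\norm{\Delta W_g}_F^2\le (R_AR_B)^2$. The heterogeneity assumption plays no formal role in this proof. It only motivates why the cross terms, whose size is governed by the cosine of the angle between client subspaces, may be much smaller than their Cauchy--Schwarz bound.

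\textbf{Main obstacle.} The mathematics is routine. The only real obstacle is being precise about what the expectation ranges over, so that the expectation on the left and the unexpectated right-hand side are consistent. I would handle this by stating the bound almost surely, which is strictly stronger and implies the expectation form.
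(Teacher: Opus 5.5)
Your proof is correct. It has the same skeleton as the paper's: expand $\norm{\Delta W_g}_F^2$ into diagonal and cross terms, bound each cross term, then take expectations. The bounding step differs.

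\textbf{The cross-term bound.} The paper rewrites the cross term as $\tr(A_i^\top B_i^\top B_j A_j)$ and invokes von Neumann's trace inequality, $|\tr(\Delta W_i^\top \Delta W_j)| \le \sum_{l=1}^r \sigma_l(\Delta W_i)\sigma_l(\Delta W_j)$. It then passes to $\norm{\Delta W_i}_F\norm{\Delta W_j}_F$, implicitly by Cauchy--Schwarz on the singular-value vectors. You apply Frobenius Cauchy--Schwarz directly.

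\textbf{What each route buys.} Your route is more elementary and shows explicitly that the factorization $\Delta W_k = B_kA_k$ plays no role (the paper's rewriting is never actually used). The paper's detour gives a sharper intermediate quantity, involving only the at most $r$ nonzero singular values. It also links tightness to alignment of singular vectors, which suits the subspace-alignment narrative but is not needed for the stated inequality. Equality in the final bound in fact requires the two updates to be nonnegatively proportional; aligned singular vectors alone give equality only in the von Neumann step.

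\textbf{The expectation.} The paper simply takes expectations of the pointwise bound while leaving the right-hand side unexpectated. Your explicit reading, as an almost-sure bound or one conditional on the norms, is the right way to make the second display meaningful. You are also right that the heterogeneity assumption enters nowhere in either argument.
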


When client updates are geometrically misaligned, the cross terms become negative, leading to a reduction of the effective update magnitude. This cancellation is ``silent'' \citep{mahla2024exploring} because it does not manifest as a gradient explosion but rather as a loss of progress.

\subsection{Convergence Analysis}

\begin{theorem}
Let $w^{(t)}$ be the global model at round $t$, and let $\Delta W_g^{(t)}$ be the current global low-rank update. After one round of subspace-regularized training with $E$ local steps and learning rate $\eta_l$, the expected improvement satisfies:

\begin{equation}
\mathbb{E}[L(w^{(t+1)}) - L(w^{(t)})] \leq \frac{\eta L}{2}\big(\sigma^2(\frac{1}{K}+O(\frac{1}{\mu^2}))+\frac{\xi^2}{K}\big)+\frac{\eta}{2}\big(O(\frac{L}{\mu^2})-1\big)\|\nabla L(w^{(t)})\|^2 
\end{equation}

where $\eta$ is the global learning rate, and $\Delta W_g^{(t)}$ is the aggregated update after subspace regularization.
\end{theorem}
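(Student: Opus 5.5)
The plan is to run the standard one-round descent argument for smooth non-convex objectives. I would write the global step as $w^{(t+1)} = w^{(t)} - \eta\, g^{(t)}$, where $g^{(t)} := -\Delta W_g^{(t)}$ collects the aggregated client displacements, each normalized by the local progress $E\eta_l$. By $L$-smoothness (Assumption 1, which carries over to $L$ because it is an average),
\[
L(w^{(t+1)}) \le L(w^{(t)}) - \eta\langle \nabla L(w^{(t)}), g^{(t)}\rangle + \frac{\eta^2 L}{2}\|g^{(t)}\|^2 .
\]
The argument then splits $g^{(t)} = \nabla L(w^{(t)}) + e^{(t)}$. The error $e^{(t)}$ is the sum of three pieces:
\begin{itemize}
\item a stochastic part, with variance at most $\xi^2/K$ by Assumption 3, using independence across the $K$ sampled clients;
\item a heterogeneity part, $\frac1K\sum_k(\nabla L_k-\nabla L)$, with second moment at most $\sigma^2/K$ by Assumption 2;
\item a \emph{drift} part, coming from the fact that each client's minimizer of $L_k^{\text{SR}}$ is not exactly a gradient step.
\end{itemize}
Using $-\langle a, a+e\rangle = -\tfrac12\|a\|^2 - \tfrac12\|a+e\|^2 + \tfrac12\|e\|^2$ together with $\eta L\le 1$ then gives
\[
\mathbb{E}[L(w^{(t+1)})-L(w^{(t)})] \le -\tfrac{\eta}{2}\|\nabla L(w^{(t)})\|^2 + \tfrac{\eta}{2}\,\mathbb{E}\|e^{(t)}\|^2 .
\]
This already accounts for the $-\tfrac{\eta}{2}\|\nabla L\|^2$ term and for the $\sigma^2/K$ and $\xi^2/K$ contributions. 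The remaining factor of $L$ in front of the variance terms can be recovered by bounding the $\eta^2 L$ quadratic term directly rather than absorbing it.

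The substantive step is to control the drift part by the proximal strength $\mu$. Here the plan is to treat the proximal term as strongly convex in the product $\Delta W_k = B_kA_k$, with modulus $\mu$, and then combine this with $L$-smoothness of $L_k$. The first-order optimality condition for client $k$ reads
\[
\nabla L_k(w^{(t)}+\Delta W_k) + \mu(\Delta W_k-\Delta W_g^{(t)}) + \lambda(\cdot)=0 ,
\]
with the gradient projected onto the factor directions. From it I would derive $\|\Delta W_k - \Delta W_g^{(t)}\| \le \tfrac1\mu\|\nabla L_k(\cdot)\| + O(\lambda R_A R_B/\mu)$, where Assumption 4 bounds the factor-regularization residual. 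Feeding this into smoothness then gives
\[
\|\nabla L_k(w^{(t)}+\Delta W_k)-\nabla L_k(w^{(t)})\| \le \tfrac{L}{\mu}\|\nabla L_k\| + \dots
\]
Squaring, and bounding $\|\nabla L_k\|^2 \le 2\|\nabla L\|^2 + 2\sigma^2$ via Assumption 2, produces the $O(L/\mu^2)\|\nabla L\|^2$ term and the $O(\sigma^2/\mu^2)$ term, matching the stated form once everything is multiplied by $\eta L/2$ or $\eta/2$. The $\lambda$- and $E$-dependent constants are absorbed into the $O(\cdot)$.

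The main obstacle is this drift step. The map $(A,B)\mapsto BA$ is nonconvex, so optimality in the factors does not literally give the first-order condition in $\Delta W_k$, and $E$ gradient steps only approximately solve the local problem. My first attempt would be to assume that the local solver returns an approximate stationary point of $L_k^{\text{SR}}$ expressed in $\Delta W$-space. I would justify this by noting that the orthonormal initialization $B_g$ and Assumption 4 keep the Jacobian of the factorization well-conditioned, with singular values bounded below. Under that assumption the $\Delta W$-space argument goes through with the conditioning constants absorbed into the $O(1/\mu^2)$ terms. A secondary point is that $\mu\gtrsim\sqrt{L}$ is needed for the gradient coefficient $O(L/\mu^2)-1$ to be negative, which the statement leaves implicit.
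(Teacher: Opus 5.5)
The gap is in the drift step, and it goes beyond the technicality you flag. It is in how $\mu$ enters. Stationarity of the proximal problem gives $\Delta W_k-\Delta W_g^{(t)}\approx-\frac{1}{\mu}\nabla L_k(w^{(t)}+\Delta W_k)$, so $\mu$ sets the size of the useful step, not just the drift. After your normalization by $E\eta_l$, the gradient part of $g^{(t)}$ is $\frac{1}{\mu E\eta_l}\nabla L(w^{(t)})$. Hence $e^{(t)}$ contains $\bigl(\frac{1}{\mu E\eta_l}-1\bigr)\nabla L(w^{(t)})$, which does not shrink as $\mu$ grows. If you normalize by $\mu$ instead, the gradient-carrying part of the server step is $\frac{\eta}{\mu}g^{(t)}$, and your own descent inequality gives $-\frac{\eta}{2\mu}\|\nabla L(w^{(t)})\|^2$. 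That is the usual FedProx trade-off, not the $\mu$-independent $-\frac{\eta}{2}$ in the statement.

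Your smoothness step also uses the wrong anchor. $L$-smoothness bounds $\|\nabla L_k(w^{(t)}+\Delta W_k)-\nabla L_k(w^{(t)})\|$ by $L\|\Delta W_k\|$, but stationarity controls only $\|\Delta W_k-\Delta W_g^{(t)}\|$. This leaves a term $L\|\Delta W_g^{(t)}\|$ with no $1/\mu$.

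The paper reaches the stated form by a different mechanism:
\begin{itemize}
\item it derives $B_kA_k=\Delta W_g^{(t)}-\frac{1}{\mu}P_{\mathcal{S}_k}(\nabla L_k)+\mathcal{O}(\lambda,\alpha)$;
\item it models the previous aggregate $\Delta W_g^{(t)}$ as an average of one-step stochastic gradients, which is where $(\sigma^2+\xi^2)/K$ comes from;
\item it treats the $\frac{1}{\mu}$ proximal step as the perturbation, so $1/\mu^2$ multiplies only the error and never the descent term.
\end{itemize}
That modeling is itself informal, but it is what produces the stated form. A FedProx-style drift bound does not substitute for it.

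Second, your justification for passing from factor-space to $\Delta W$-space stationarity fails. The Jacobian of $(A,B)\mapsto BA$ has range equal to the tangent space $T_k$ of the rank-$r$ manifold at $B_kA_k$, which has dimension $r(m+n-r)\ll mn$. However well-conditioned it is, stationarity in $(A_k,B_k)$ gives only $P_{T_k}\bigl[\nabla L_k+\mu(\Delta W_k-\Delta W_g^{(t)})\bigr]\approx 0$. The normal component $(I-P_{T_k})\nabla L_k$ never reaches the client update, and it is exactly the misaligned gradient the theorem is about.

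The paper keeps the projection and writes $\sum_k p_kP_{\mathcal{S}_k}(\nabla L_k)=\nabla L(w^{(t)})-\sum_k p_k\nabla_{\mathcal{S}_k^\perp}L_k$. It then bounds the remainder via Assumption 2 inside the $1/\mu^2$ term. In your decomposition the same component would sit in $e^{(t)}$ with an $O(1)$ coefficient. It would contribute $O(\sigma^2+\|\nabla L(w^{(t)})\|^2)$ and wipe out the $O(L/\mu^2)-1$ coefficient.

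The rest of your plan is in line with the paper: the descent-lemma step, the $\sigma^2/K$ and $\xi^2/K$ pieces, and $\|\nabla L_k\|^2\le 2\|\nabla L\|^2+2\sigma^2$.
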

Here $\mu$ is the subspace alignment parameter, and the term $O(L/\mu^2)$ arises from the regularization bias, while the constant $1$ originates from the negative gradient term that drives descent.

\begin{theorem}
Let Assumptions 1--4 hold. Choose the global learning rate $\eta \le \frac{1}{4L}$ and the alignment parameter $\mu$ such that $\mu^2 \ge 2C_2L$ (where $C_2$ is the constant from the aggregation error bound). Then after $T$ communication rounds, the average squared gradient norm of the global model satisfies
\begin{equation}
\frac{1}{T}\sum_{t=0}^{T-1}\EE\bigl[\norm{\nabla L(w^{(t)})}^2\bigr]
\;\le\; \frac{4\Delta_0}{\eta T} \;+\; 2L\left(\frac{\sigma^2+\xi^2}{K} + \frac{C_1\sigma^2}{\mu^2}\right),
\end{equation}
where $\Delta_0 = L(w^{(0)}) - L(w^*)$ is the initial optimality gap, $\sigma^2$ and $\xi^2$ are the heterogeneity and stochastic variance bounds, $K$ is the number of clients, and $C_1>0$ is an absolute constant depending on the projection properties of the low-rank factors.
\end{theorem}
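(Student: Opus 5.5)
The plan is to use the one-round descent inequality of Theorem~2 and telescope it over the $T$ rounds. The constants hidden in that bound will first be written explicitly. Specifically, the $O(1/\mu^2)$ term multiplying $\sigma^2$ becomes $C_1/\mu^2$, and the $O(L/\mu^2)$ term multiplying $\|\nabla L(w^{(t)})\|^2$ becomes $C_2L/\mu^2$. Here $C_2$ is the aggregation-error constant mentioned in the statement. These constants come from bounding the deviation of each local low-rank update $B_kA_k$ from the regularized reference. The deviation is controlled by the proximal term: stationarity of $L_k^{\text{OST}}$ in the product $B_kA_k$ gives a displacement of order $\|\nabla L_k\|/\mu$. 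Its square, split into $\|\nabla L\|^2$ plus heterogeneity $\sigma^2$ using Assumption~2, yields the $1/\mu^2$ factors. Assumption~4 keeps the factors in a compact set, so $C_1$ and $C_2$ do not depend on $t$. The per-round inequality to work with is therefore
\[
\EE[L(w^{(t+1)})] - \EE[L(w^{(t)})] \le \frac{\eta L}{2}\Big(\frac{\sigma^2+\xi^2}{K} + \frac{C_1\sigma^2}{\mu^2}\Big) - \frac{\eta}{2}\Big(1 - \frac{C_2 L}{\mu^2}\Big)\EE\|\nabla L(w^{(t)})\|^2 ,
\]
where the first term is Theorem~2's $\sigma^2/K + \xi^2/K$ grouped together.

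Next the parameter conditions are applied. The assumption $\mu^2 \ge 2C_2L$ gives $1 - C_2L/\mu^2 \ge 1/2$. The coefficient of $-\EE\|\nabla L(w^{(t)})\|^2$ is then at least $\eta/4$. The condition $\eta \le 1/(4L)$ is used where Theorem~2 is derived. There, $L$-smoothness (Assumption~1) produces a quadratic term $\frac{L\eta^2}{2}\EE\|\Delta W_g\|^2$. Expanding $\EE\|\Delta W_g\|^2$ with the cancellation identity of Theorem~1 and the variance bound (Assumption~3) gives a gradient-norm contribution proportional to $L\eta^2$. The step-size condition lets this contribution be absorbed into the descent term, up to constant factors, without changing the form above. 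If these constants have already been absorbed in Theorem~2, I would simply invoke it.

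Then the inequality is rearranged to
\[
\frac{\eta}{4}\EE\|\nabla L(w^{(t)})\|^2 \le \EE[L(w^{(t)})] - \EE[L(w^{(t+1)})] + \frac{\eta L}{2}\Big(\frac{\sigma^2+\xi^2}{K} + \frac{C_1\sigma^2}{\mu^2}\Big),
\]
and summed over $t = 0,\dots,T-1$. The loss differences telescope. Since $L(w^{(T)}) \ge L(w^*)$, the telescoped sum is bounded by $\Delta_0$. Dividing by $\eta T/4$ gives
\[
\frac{1}{T}\sum_{t=0}^{T-1}\EE\|\nabla L(w^{(t)})\|^2 \le \frac{4\Delta_0}{\eta T} + 2L\Big(\frac{\sigma^2+\xi^2}{K} + \frac{C_1\sigma^2}{\mu^2}\Big),
\]
which is exactly the claimed bound.

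The main obstacle is the first step, namely justifying the explicit constants $C_1$ and $C_2$, uniformly in $t$, from the $O(\cdot)$ terms of Theorem~2. This requires an aggregation-error bound that holds despite the nonconvex bilinear parameterization $B_kA_k$ and the SVD-based re-projection of the reference. I would first try to bound the distance between the product $B_kA_k$ and the reference via the proximal optimality condition, treating the product as the variable. I would then use Assumption~4 to control the Jacobian of the map $(A,B)\mapsto BA$, whose norm is at most $R_A + R_B$. The remaining steps are routine algebra.
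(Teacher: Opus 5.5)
Your proposal is correct and matches the paper's proof step for step. The paper likewise rewrites Theorem~2's $O(\cdot)$ terms with explicit constants as $C_1\sigma^2/\mu^2$ and $C_2L/\mu^2$, uses $\mu^2\ge 2C_2L$ to get a descent coefficient of $\eta/4$, rearranges, telescopes over $t=0,\dots,T-1$, bounds the loss difference by $\Delta_0$, and divides by $\eta T/4$; the existence of $t$-uniform constants $C_1,C_2$, which you flag as the main obstacle, is likewise only asserted in the paper (inherited from Theorem~2), so invoking Theorem~2 is all this statement requires.
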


Now we expand our results to derive the optimal $\mu$ in order to achieve a target gradient norm accuracy $\varepsilon$.
Define the transient term $\gamma = \frac{4\Delta_0}{\eta T}$ and the irreducible error $\Omega = 2L\frac{\sigma^2+\xi^2}{K}$. For a large number of clients $K$, we have $\varepsilon > \Omega$. Under this condition, we can implement the following corollary.

\begin{corollary}[$\varepsilon$-Accuracy Condition]
\label{cor:eps}
Assume $\varepsilon > \Omega$. Choose the number of communication rounds $T$ and the alignment parameter $\mu$ such that
\begin{equation}
   T \;\ge\; \frac{8\Delta_0}{\eta(\varepsilon - \Omega)},
\qquad\text{and}\qquad
\mu \;\ge\; \sqrt{\frac{\eta T L \sigma^2}{2\Delta_0}}. 
\end{equation}
The average squared gradient norm satisfies
\begin{equation}
  \frac{1}{T}\sum_{t=0}^{T-1}\EE\bigl[\norm{\nabla L(w^{(t)})}^2\bigr] \;<\; \varepsilon. 
\end{equation}
\end{corollary}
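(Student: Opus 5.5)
The plan is to plug the two choices of $T$ and $\mu$ into the bound of the preceding convergence theorem (Theorem~3) and show that each non-irreducible term uses at most half of the slack $\varepsilon-\Omega$. Write the right-hand side of Theorem~3 as
\[
\gamma + \Omega + \frac{2LC_1\sigma^2}{\mu^2}, \qquad \gamma = \frac{4\Delta_0}{\eta T}, \quad \Omega = 2L\frac{\sigma^2+\xi^2}{K}.
\]
Theorem~3 requires $\eta\le 1/(4L)$ and $\mu^2\ge 2C_2L$. I will take both as standing hypotheses of the corollary. If the lower bound on $\mu$ in the corollary does not already imply $\mu^2\ge 2C_2L$, I replace $\mu$ by the larger of the two thresholds; increasing $\mu$ only shrinks the bias term, so the argument below still goes through.

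\emph{Step 1: the transient term.} The condition $T\ge 8\Delta_0/(\eta(\varepsilon-\Omega))$ rearranges directly to
\[
\gamma = \frac{4\Delta_0}{\eta T} \le \frac{\varepsilon-\Omega}{2}.
\]
This step uses $\varepsilon>\Omega$, so that the threshold on $T$ is positive and finite.

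\emph{Step 2: the alignment-bias term.} From $\mu^2\ge \eta T L\sigma^2/(2\Delta_0)$ we get
\[
\frac{2LC_1\sigma^2}{\mu^2} \le 2LC_1\sigma^2\cdot\frac{2\Delta_0}{\eta T L\sigma^2} = \frac{4C_1\Delta_0}{\eta T} = C_1\gamma.
\]
The point of this choice is that $\mu$ is tied to $T$ so that the bias decays at the same $1/T$ rate as the transient term.

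\emph{Step 3: combine.} Adding the pieces gives the bound
\[
\Omega + (1+C_1)\gamma \le \Omega + (1+C_1)\frac{\varepsilon-\Omega}{2},
\]
which is at most $\varepsilon$ exactly when $C_1\le 1$. This constant bookkeeping is the main obstacle. The corollary as stated is only valid if the absolute constant $C_1$ from Theorem~3 is normalized to be at most $1$. Otherwise the $\mu$-threshold must be strengthened to $\mu\ge\sqrt{C_1\eta T L\sigma^2/(2\Delta_0)}$, which makes the bias at most $\gamma$ for any $C_1$. I will state this normalization explicitly.

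Strictness needs a separate remark. With $C_1\le 1$, Steps~1--3 give $\le\varepsilon$, and equality requires both thresholds to hold with equality and $C_1=1$ simultaneously. So I will get the strict ``$<$'' by requiring one of the two conditions strictly, either $T$ strictly above its threshold or $\mu$ strictly above its threshold, or by using $C_1<1$. Any of these makes the final inequality strict and yields the claimed bound.
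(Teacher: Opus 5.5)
Your argument is the one the paper intends. Its appendix just defines $\gamma = 4\Delta_0/(\eta T)$ and $\Omega$ and reads the two thresholds off Theorem~3, and your Steps~1--2 ($\gamma \le (\varepsilon-\Omega)/2$ and bias $\le C_1\gamma$) are the computation it leaves implicit.

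Your caveats point to defects in the statement, not in your proof. The stated $\mu$-threshold is exactly what $2LC_1\sigma^2/\mu^2 \le \gamma$ gives with $C_1 = 1$, a normalization the paper makes silently; its own Theorem~2 bound, via the term $(4/\mu^2)\cdot 2\sigma^2$, would suggest $C_1 = 8$. In addition, non-strict thresholds only yield ``$\le \varepsilon$''. So your repairs are what the corollary needs as written: either take $C_1<1$, or use the $C_1$-scaled threshold and make one of the two conditions strict.
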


The first term of the bound decays as $1/T$ and reflects the initial condition. The second term is irreducible and consists of the standard federated learning errors $\frac{\sigma^2+\xi^2}{K}$ plus a misalignment penalty $\frac{C_1\sigma^2}{\mu^2}$ that can be made arbitrarily small by increasing $\mu$. Compared to standard LoRA-FedAvg, which suffers from a fixed silent cancellation term, SR-LoRA replaces that term with a tunable penalty, thereby achieving a strictly better convergence guarantee under heterogeneity.

\section{Experimental Setup}

We evaluate the proposed method on two pretrained models:

\begin{itemize}
    \item \textbf{RoBERTa-large} 
    (\url{https://huggingface.co/FacebookAI/roberta-large})
    \item \textbf{SmolLM-360M} 
    (\url{https://huggingface.co/HuggingFaceTB/SmolLM-360M})
\end{itemize}

We simulate a federated learning setup with 10 clients. To introduce heterogeneity, we partition the dataset using a Dirichlet distribution ($\beta = 0.5$). All experiments are repeated over three random seeds (42, 43, 44), yielding 24 total runs.

\paragraph{Dataset.}
We use the HellaSwag dataset \citep{zellers2019hellaswag}, a commonsense reasoning benchmark designed to evaluate language models' ability to complete sentences coherently. We compare four methods: (1)~\textbf{FedAvg} -- standard LoRA-FedAvg; (2)~\textbf{SVD} -- LoRA with SVD-based update redistribution; (3)~\textbf{FedSVD} \citep{lee2025fedsvd} -- SVD-based federated LoRA; and (4)~\textbf{Subspace-Reg} -- our proposed subspace-regularized LoRA.

\paragraph{Hyperparameters.}
Table~\ref{tab:hyperparams} summarizes the hyperparameters used in all experiments.

\begin{table}[H]
\centering
\caption{Hyperparameters used in all reported experiments.}
\label{tab:hyperparams}
\begin{tabular}{ll}
\toprule
Hyperparameter & Value \\
\midrule
LoRA rank $r$                  & 8 \\
Local epochs $E$               & 3 \\
Local learning rate $\eta_l$   & 2e-4 \\
Global learning rate $\eta$    & 1.0 \\
Alignment strength $\mu$       & 0.1 \\
Basis regularization $\alpha$  & 0.01 \\
Clients per round $K$          & 10 \\
Communication rounds           & 20 \\
Non-IID Dirichlet $\beta$      & 0.5 \\
Seeds                          & 42, 43, 44 \\
\bottomrule
\end{tabular}
\end{table}

\paragraph{Alignment Diagnostics.}
In addition to predictive accuracy and loss, we log four geometric alignment diagnostics. Let $\Delta W_k = B_k A_k$ denote client $k$'s update and $\Delta W_g$ the aggregated global update:
\begin{align*}
\text{Cancellation Ratio} &= \frac{\|\Delta W_g\|_F}{\sum_k p_k \|\Delta W_k\|_F}, \\
\text{Client--Global Cosine} &= \frac{1}{K}\sum_k \frac{\langle \Delta W_k, \Delta W_g \rangle_F}{\|\Delta W_k\|_F \|\Delta W_g\|_F}, \\
\text{Pairwise Client Cosine} &= \frac{1}{\binom{K}{2}} \sum_{i < j} \frac{\langle \Delta W_i, \Delta W_j \rangle_F}{\|\Delta W_i\|_F \|\Delta W_j\|_F}, \\
\text{Basis Overlap} &= \frac{1}{K}\sum_k \frac{\|B_g^\top B_k\|_F^2}{r},
\end{align*}
where $B_g$ is the leading-$r$ left singular vectors of $\Delta W_g$ and $r$ is the LoRA rank. A cancellation ratio near 1 indicates constructive aggregation; a basis overlap near 1 indicates strong subspace alignment between client updates and the global reference.

\section{Experimental Results}

\subsection{RoBERTa-large: 3-Seed Mean Results}

Table~\ref{tab:roberta-results} summarizes the 3-seed mean ($\pm$ std) for all methods on RoBERTa-large / HellaSwag. Subspace-Reg achieves the strongest performance on all accuracy metrics and the lowest final loss. The improvement over the best baseline (FedSVD) is substantial: $+0.118$ in mean best accuracy ($0.454$ vs.\ $0.336$) and $+0.093$ in mean final accuracy ($0.429$ vs.\ $0.336$).

\begin{table}[H]
\centering
\caption{3-seed mean ($\pm$ std) results on RoBERTa-large / HellaSwag, 10-client non-IID.}
\label{tab:roberta-results}
\begin{tabular}{lccccc}
\toprule
Algorithm & Best Acc. & Mean Last-5 & Final Acc. & Final Loss & Basis Overlap \\
\midrule
FedAvg        & 0.334 $\pm$ 0.020 & 0.322 & 0.332 $\pm$ 0.019 & 1.3825 & 0.9582 \\
SVD           & 0.315 $\pm$ 0.007 & 0.305 & 0.314 $\pm$ 0.007 & 1.3836 & 0.9912 \\
FedSVD        & 0.336 $\pm$ 0.021 & 0.321 & 0.336 $\pm$ 0.021 & 1.3823 & 0.9578 \\
Subspace-Reg  & \textbf{0.454 $\pm$ 0.023} & \textbf{0.425} & \textbf{0.429 $\pm$ 0.011} & \textbf{1.3630} & \textbf{0.99998} \\
\bottomrule
\end{tabular}
\end{table}

Figure~\ref{fig:accuracy} visualizes the 3-seed mean best and final accuracy for each method with standard deviation error bars. The performance gap between Subspace-Reg and all baselines is large and consistent across seeds.

\begin{figure}[H]
\centering
\includegraphics[width=0.82\textwidth]{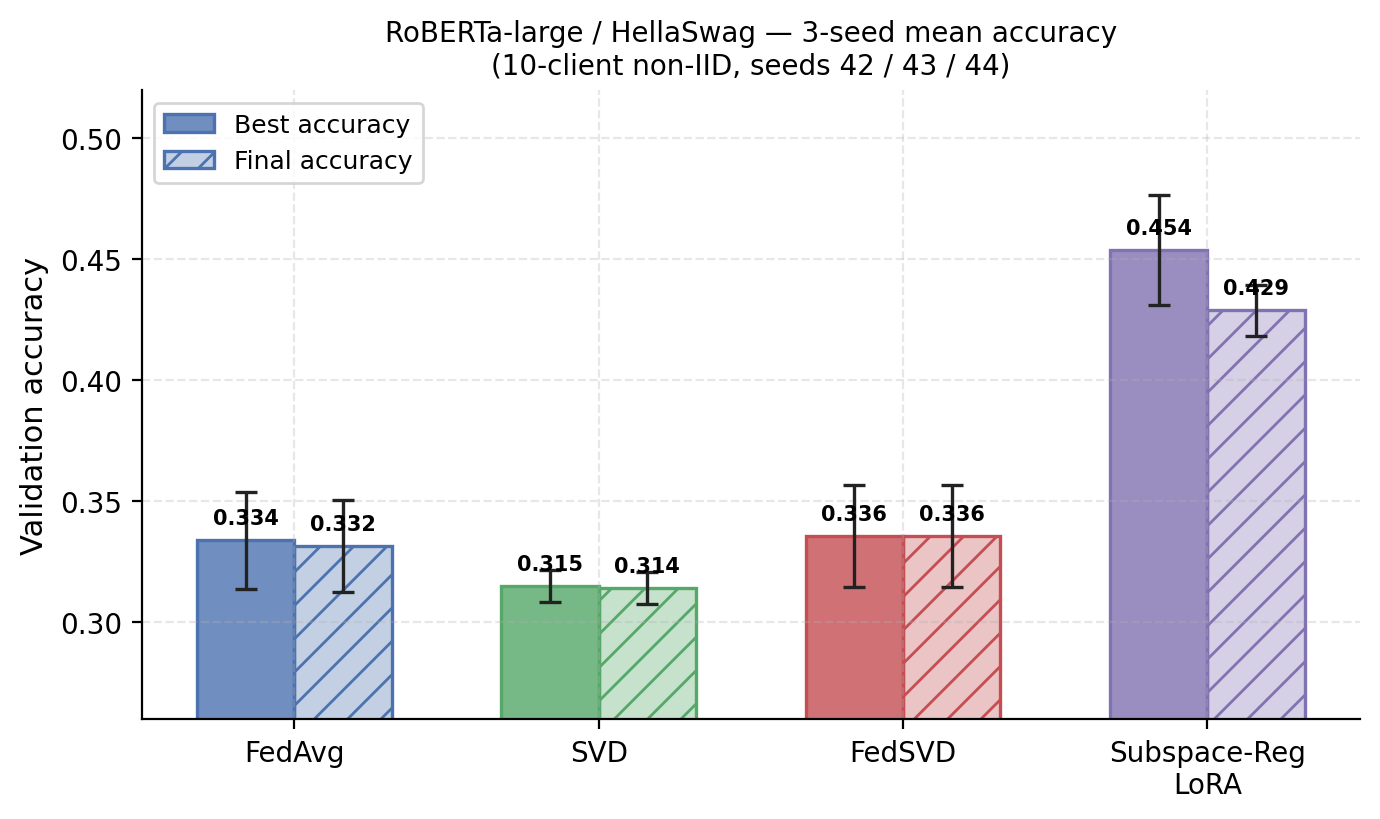}
\caption{3-seed mean best and final validation accuracy on RoBERTa-large / HellaSwag (10-client non-IID). Error bars show standard deviation across seeds 42, 43, 44.}
\label{fig:accuracy}
\end{figure}

The per-seed breakdown confirms the robustness of this result. On seed 42, Subspace-Reg achieves best accuracy $0.449$ and final accuracy $0.416$. On seed 43, best $0.429$ and final $0.429$. On seed 44, best $0.484$ and final $0.442$. In all three seeds, Subspace-Reg substantially outperforms all baselines, whose best accuracies range from $0.306$--$0.354$.

\subsection{Alignment Diagnostics: RoBERTa-large}

Figure~\ref{fig:alignment} shows all four subspace alignment diagnostics averaged across 3 seeds. The SVD baseline leads on cancellation ratio ($0.987$), client--global cosine ($0.987$), and pairwise client cosine ($0.961$), reflecting its redistribution design. The key differentiator is basis overlap, where Subspace-Reg achieves near-perfect alignment ($0.99998$) while the baselines plateau near $0.958$--$0.991$.

\begin{figure}[H]
\centering
\includegraphics[width=0.90\textwidth]{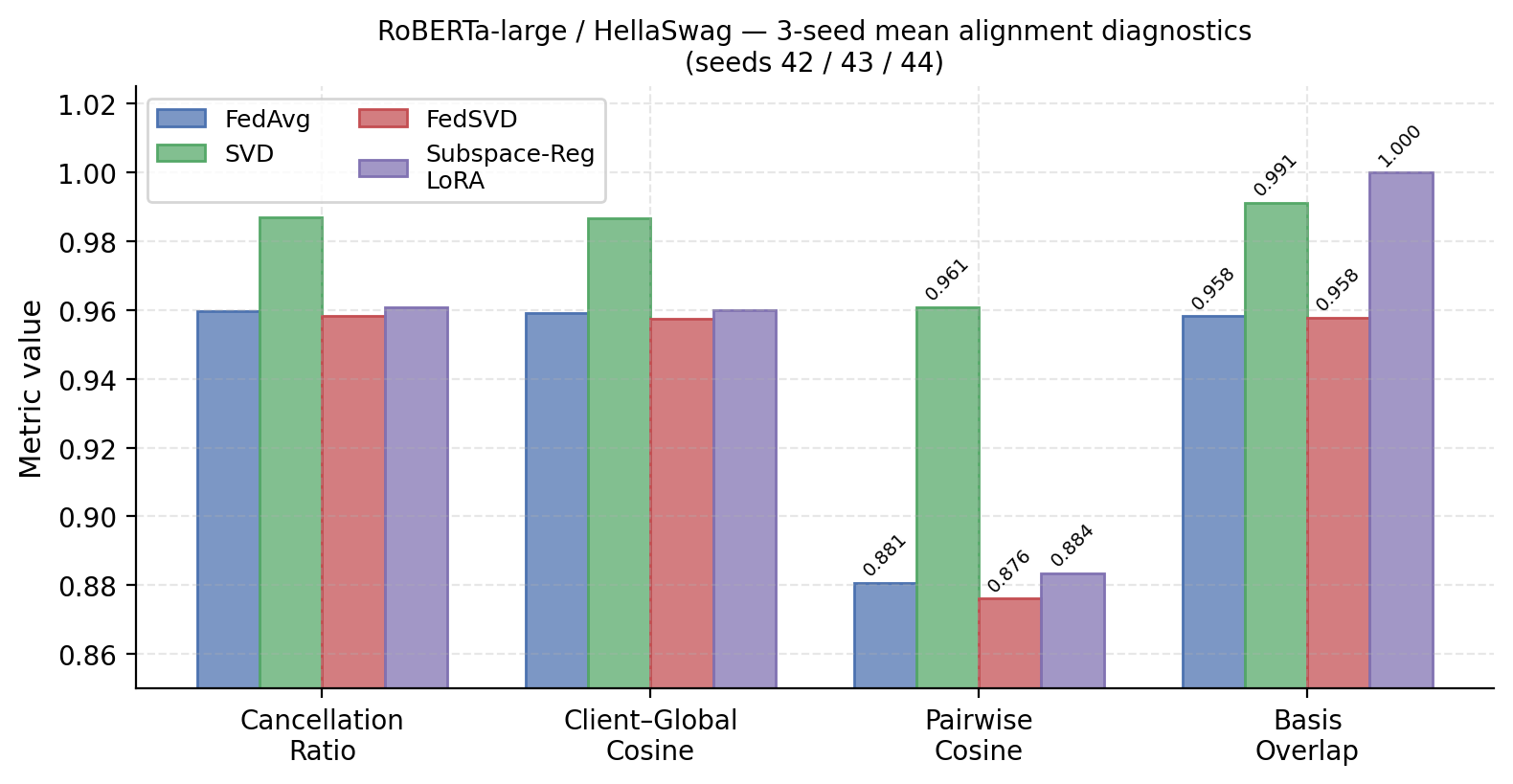}
\caption{3-seed mean alignment diagnostics on RoBERTa-large / HellaSwag. Y-axis is zoomed to $[0.85, 1.02]$ to resolve differences. SVD leads on cosine-based metrics while Subspace-Reg achieves near-perfect basis overlap.}
\label{fig:alignment}
\end{figure}

Table~\ref{tab:alignment-roberta} provides the complete numeric breakdown. SVD achieves the strongest cosine-based alignment but the weakest predictive accuracy, while Subspace-Reg uniquely combines the lowest final loss ($1.3630$) with the highest basis overlap ($0.99998$). This suggests that cosine similarity alone is not the operative mechanism; rather, it is the direct subspace overlap---enforced by the basis regularization term---that drives improved aggregation quality.

\begin{table}[H]
\centering
\caption{3-seed mean alignment and loss metrics on RoBERTa-large / HellaSwag.}
\label{tab:alignment-roberta}
\begin{tabular}{lccccc}
\toprule
Algorithm & Final Loss & Canc.\ Ratio & Client--Global Cos. & Pairwise Cos. & Basis Overlap \\
\midrule
FedAvg            & 1.3825          & 0.9598              & 0.9591                  & 0.8807              & 0.9582 \\
SVD               & 1.3836          & \textbf{0.9870}     & \textbf{0.9868}         & \textbf{0.9609}     & 0.9912 \\
FedSVD            & 1.3823          & 0.9582              & 0.9575                  & 0.8762              & 0.9578 \\
Subspace-Reg LoRA & \textbf{1.3630} & 0.9606              & 0.9600                  & 0.8835              & \textbf{0.99998} \\
\bottomrule
\end{tabular}
\end{table}

\subsection{SmolLM-360M: 3-Seed Mean Results}

Table~\ref{tab:smollm-results} reports the 3-seed mean results on SmolLM-360M / HellaSwag. On this model, FedAvg achieves the highest accuracy ($0.581$ best, $0.581$ final) and lowest final loss ($1.108$). Subspace-Reg ranks second in accuracy ($0.565$ best, $0.564$ final) and second in loss ($1.125$). This indicates that the accuracy gains of subspace regularization are \emph{model-dependent}: they are large on RoBERTa-large but absent on SmolLM-360M.

Critically, Subspace-Reg maintains near-perfect basis overlap ($0.99989$) on SmolLM-360M as well, versus $0.968$--$0.991$ for all baselines. This geometric result is consistent across both models and all 24 runs, confirming the alignment hypothesis even when downstream accuracy does not improve.

\begin{table}[H]
\centering
\caption{3-seed mean ($\pm$ std) results on SmolLM-360M / HellaSwag, 10-client non-IID.}
\label{tab:smollm-results}
\begin{tabular}{lccccc}
\toprule
Algorithm & Best Acc. & Mean Last-5 & Final Acc. & Final Loss & Basis Overlap \\
\midrule
FedAvg        & \textbf{0.581 $\pm$ 0.009} & \textbf{0.569} & \textbf{0.581 $\pm$ 0.009} & \textbf{1.1078} & 0.9777 \\
SVD           & 0.536 $\pm$ 0.002 & 0.532 & 0.536 $\pm$ 0.002 & 1.1708 & 0.9906 \\
FedSVD        & 0.521 $\pm$ 0.008 & 0.519 & 0.521 $\pm$ 0.008 & 1.1848 & 0.9680 \\
Subspace-Reg  & 0.565 $\pm$ 0.009 & 0.561 & 0.564 $\pm$ 0.010 & 1.1254 & \textbf{0.99989} \\
\bottomrule
\end{tabular}
\end{table}

Figure~\ref{fig:loss} visualizes the 3-seed mean final validation loss for both models side by side.

\begin{figure}[H]
\centering
\includegraphics[width=0.85\textwidth]{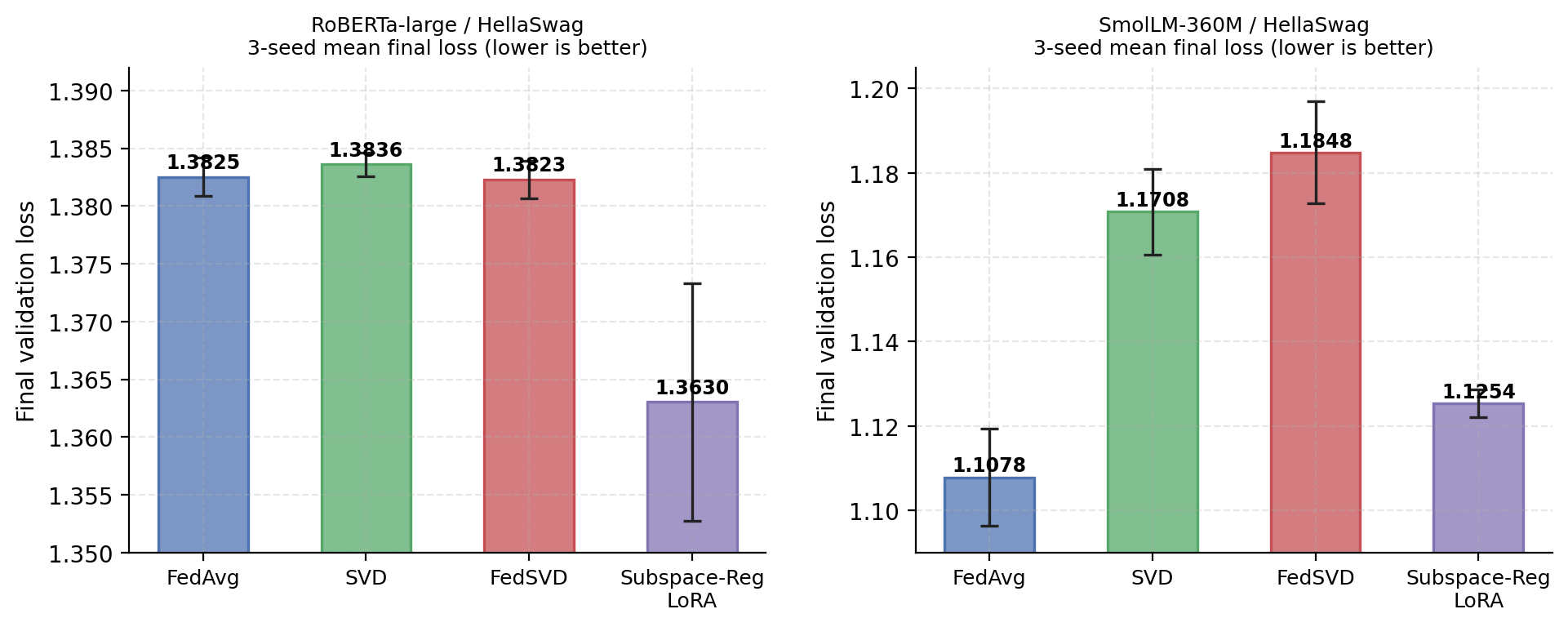}
\caption{3-seed mean final validation loss on RoBERTa-large (left) and SmolLM-360M (right). Error bars show standard deviation across seeds 42, 43, 44. Lower is better.}
\label{fig:loss}
\end{figure}

\subsection{SmolLM-360M Alignment Diagnostics}

Table~\ref{tab:alignment-smollm} shows the 3-seed mean alignment diagnostics on SmolLM-360M. As on RoBERTa-large, Subspace-Reg achieves near-perfect basis overlap ($0.99989$) while all baselines remain at $0.968$--$0.991$. The SVD baseline again leads on cosine-based metrics. The convergence of this pattern across both models strengthens the conclusion that basis overlap is the reliable geometric indicator of the method's effect.

\begin{table}[H]
\centering
\caption{3-seed mean alignment and loss metrics on SmolLM-360M / HellaSwag.}
\label{tab:alignment-smollm}
\begin{tabular}{lccccc}
\toprule
Algorithm & Final Loss & Canc.\ Ratio & Client--Global Cos. & Pairwise Cos. & Basis Overlap \\
\midrule
FedAvg            & \textbf{1.1078} & 0.9809              & 0.9806                  & 0.9428              & 0.9777 \\
SVD               & 1.1708          & \textbf{0.9891}     & \textbf{0.9890}         & \textbf{0.9673}     & 0.9906 \\
FedSVD            & 1.1848          & 0.9707              & 0.9702                  & 0.9126              & 0.9680 \\
Subspace-Reg LoRA & 1.1254          & 0.9785              & 0.9781                  & 0.9356              & \textbf{0.99989} \\
\bottomrule
\end{tabular}
\end{table}

\subsection{Cross-Seed Consistency}

Figure~\ref{fig:crossseed} shows per-seed final accuracy for all methods on both models. On RoBERTa-large, Subspace-Reg leads in all three seeds (seed 42: $0.416$; seed 43: $0.429$; seed 44: $0.442$). On SmolLM-360M, FedAvg leads in all three seeds (seed 42: $0.592$; seed 43: $0.569$; seed 44: $0.581$). Within-method seed variance is moderate ($\sigma \approx 0.010$--$0.023$ on RoBERTa-large, $\sigma \approx 0.002$--$0.010$ on SmolLM-360M), confirming the qualitative conclusions hold robustly across seeds.

\begin{figure}[H]
\centering
\includegraphics[width=0.88\textwidth]{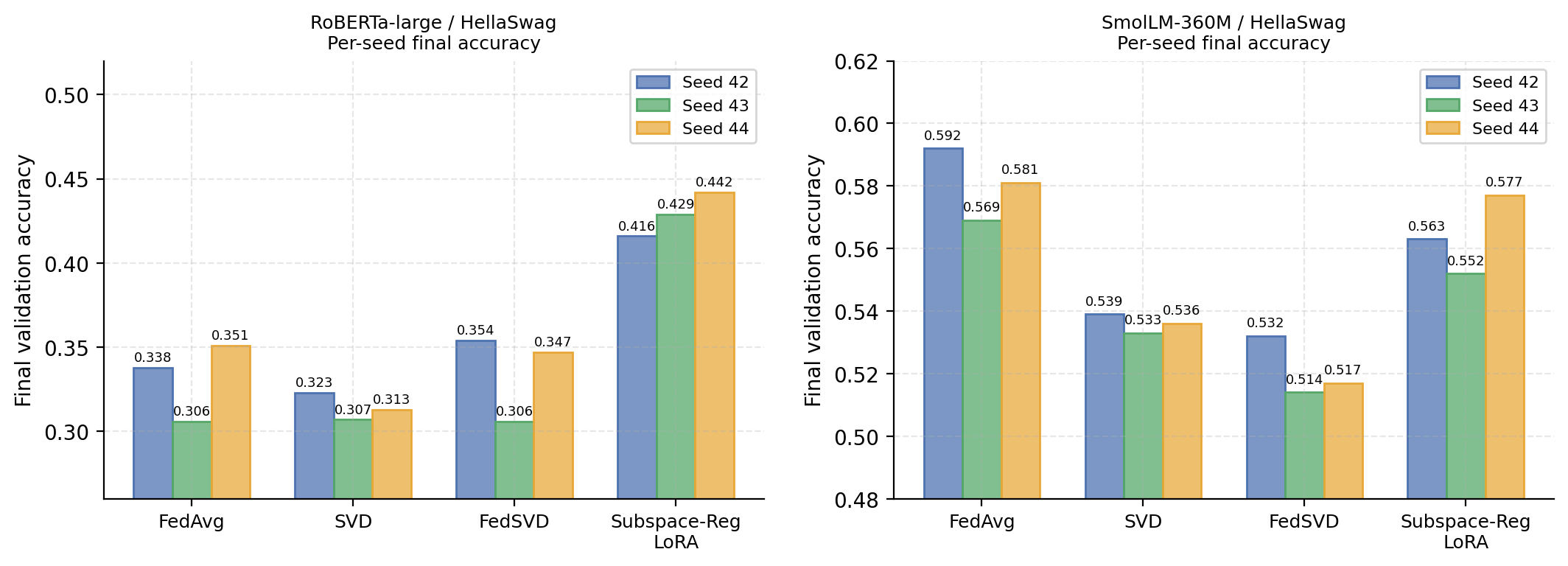}
\caption{Per-seed final accuracy for all methods on RoBERTa-large (left) and SmolLM-360M (right), seeds 42, 43, 44.}
\label{fig:crossseed}
\end{figure}

\subsection{Discussion}

Taken together, the results support a nuanced view of the subspace-alignment hypothesis:

\begin{itemize}
    \item \textbf{Geometric alignment is consistent and robust.} Subspace-Reg achieves near-perfect basis overlap (${\approx}0.9999$) on both RoBERTa-large and SmolLM-360M across all three seeds.
    \item \textbf{Accuracy gains are model-dependent.} On RoBERTa-large, near-perfect alignment translates into a large accuracy improvement ($+0.118$ best accuracy vs.\ best baseline). On SmolLM-360M, Subspace-Reg ranks second in accuracy despite having the strongest geometric alignment.
    \item \textbf{Cosine similarity is not the operative mechanism.} SVD achieves higher cosine-based metrics on both models but does not lead on accuracy or loss. Basis overlap is the more relevant diagnostic.
    \item \textbf{Future work.} Understanding why alignment benefits are model-dependent---potentially related to model capacity, initialization, or architectural differences between encoders and decoders---is an important open question. Ablations on the alignment parameter $\mu$ are also warranted.
\end{itemize}

\section{Conclusion}

This paper presents a complete empirical and theoretical investigation of Subspace-Constrained Federated LoRA across 24 experimental runs (4 methods $\times$ 3 seeds $\times$ 2 models). We propose a subspace-regularized LoRA objective and prove a convergence bound showing that the misalignment penalty is tunable via $\mu$, giving a strictly better guarantee than standard LoRA-FedAvg under heterogeneity. Empirically, the proposed method consistently achieves near-perfect basis overlap (${\approx}0.9999$) on both RoBERTa-large and SmolLM-360M across all seeds. On RoBERTa-large this coincides with the strongest accuracy and loss on every metric; on SmolLM-360M the accuracy gains are smaller, revealing that the downstream benefit of alignment is model-dependent. The key differentiating diagnostic is basis overlap, not cosine similarity---a distinction that has both theoretical and practical significance for the design of future federated LoRA methods.

\bibliography{iclr2026_conference}
\bibliographystyle{iclr2026_conference}

\newpage
\section*{Appendix}
\subsection*{Proof of the Theorem 1}
\begin{proof}
First, we expand the squared Frobenius norm using the expansion of the square of sums as,
\begin{align}
\norm{\Delta W_g}_F^2 &= \left\|\sum_{k=1}^K p_k \Delta W_k\right\|_F^2 \nonumber \\
&= \sum_{k=1}^K p_k^2 \|\Delta W_k\|_F^2 + \sum_{i \neq j} p_i p_j \tr(\Delta W_i^\top \Delta W_j)
\end{align}
For the second part, note that for any two matrices $\Delta W_i = B_iA_i$ and $\Delta W_j = B_jA_j$:
$$
\tr(\Delta W_i^\top \Delta W_j) = \tr((B_iA_i)^\top (B_jA_j)) = \tr(A_i^\top B_i^\top B_j A_j)
$$
We can reform the above terms as
\begin{equation}
\tr(A_i^\top B_i^\top B_j A_j) = \tr(A_j A_i^\top B_i^\top B_j)
\end{equation}

By von Neumann's trace inequality and Ruhe's trace inequality \cite{Marshall1979InequalitiesTO}:

\begin{equation}
|\tr(\Delta W_i^\top \Delta W_j)| \leq \sum_{l=1}^r \sigma_l(\Delta W_i) \sigma_l(\Delta W_j) \leq \|\Delta W_i\|_F \|\Delta W_j\|_F
\end{equation}

where $\sigma_l(\cdot)$ denotes singular values. The equality case occurs when the left and right singular vectors are perfectly aligned. 

Taking expectations,
\begin{equation}
\mathbb{E}[\norm{\Delta W_g}_F^2] \leq \sum_{k=1}^K p_k^2 \norm{\Delta W_k}_F^2 + \sum_{i \neq j} p_i p_j \norm{\Delta W_i}_F \norm{\Delta W_j}_F 
\end{equation}
\end{proof}

\subsection*{Proof of the Theorem 2}
\begin{proof}
By L-smoothness (Assumption 1):
\begin{align}
L(w^{(t+1)}) &\leq L(w^{(t)}) + \langle \nabla L(w^{(t)}), w^{(t+1)} - w^{(t)} \rangle + \frac{L}{2}\|w^{(t+1)} - w^{(t)}\|^2
\end{align}
Since $w^{(t+1)} = w^{(t)} + \eta \Delta W_g^{(t)}$:
\begin{align}
L(w^{(t+1)}) &\leq L(w^{(t)}) + \eta \langle \nabla L(w^{(t)}), \Delta W_g^{(t)} \rangle + \frac{L\eta^2}{2}\|\Delta W_g^{(t)}\|^2
\end{align}

Let $\delta^{(t)} = \Delta W_g^{(t)} - \nabla L(w^{(t)})$. Then:

\begin{align}
\langle \nabla L(w^{(t)}), \Delta W_g^{(t)} \rangle &= \|\nabla L(w^{(t)})\|^2 + \langle \nabla L(w^{(t)}), \delta^{(t)} \rangle
\end{align}
Using the inequality $\langle a,b \rangle \leq \frac{1}{2}\|a\|^2 + \frac{1}{2}\|b\|^2$:
\begin{align}
\langle \nabla L(w^{(t)}), \delta^{(t)} \rangle \leq \frac{1}{2}\|\nabla L(w^{(t)})\|^2 + \frac{1}{2}\|\delta^{(t)}\|^2
\end{align}

Also, $\|\Delta W_g^{(t)}\|^2 \leq 2\|\nabla L(w^{(t)})\|^2 + 2\|\delta^{(t)}\|^2$ by triangle inequality.

Substituting and taking expectations:

\begin{align}
\mathbb{E}[L(w^{(t+1)}) - L(w^{(t)})] &\leq -\frac{\eta}{2}(1 - 2L\eta)\|\nabla L(w^{(t)})\|^2 \nonumber \\
&\quad + \frac{\eta}{2}(1 + 2L\eta)\mathbb{E}\|\delta^{(t)}\|^2
\end{align}

For $\eta \leq \frac{1}{4L}$, we have $1 - 2L\eta \geq \frac{1}{2}$ and $1 + 2L\eta \leq \frac{3}{2}$, yielding:

\begin{align}
\mathbb{E}[L(w^{(t+1)}) - L(w^{(t)})] &\leq -\frac{\eta}{4}\|\nabla L(w^{(t)})\|^2 + \frac{3\eta}{4}\mathbb{E}\|\delta^{(t)}\|^2
\end{align}

$$\mathbb{E}[L(w^{(t+1)}) - L(w^{(t)})] \leq \frac{\eta L}{2}\mathbb{E}\|\Delta W_g^{(t)} - \nabla L(w^{(t)})\|^2 -\frac{\eta}{2}\|\nabla L(w^{(t)})\|^2 
$$
\end{proof}
We start from the optimality conditions for the subspace-regularized objective of client $k$:

\begin{align}
\nabla_{A_k} L_k^{\text{SR}} &= \nabla_{A_k} L_k + \mu B_k^\top (B_kA_k - \Delta W_g^{(t)}) + \lambda A_k = 0 \\
\nabla_{B_k} L_k^{\text{SR}} &= \nabla_{B_k} L_k + \mu (B_kA_k - \Delta W_g^{(t)})A_k^\top + \lambda B_k + \alpha(B_k - B_g^{(t)}) = 0
\end{align}
We assume that $A_k$ has full row rank ($r \le n$) and $B_k$ has full column rank ($r \le m$).  
Let $A_k^\dagger = A_k^\top (A_k A_k^\top)^{-1}$ denote the right pseudoinverse (so $A_k A_k^\dagger = I_r$), and  
$B_k^\dagger = (B_k^\top B_k)^{-1} B_k^\top$ the left pseudoinverse (so $B_k^\dagger B_k = I_r$).

Multiply (18) on the \emph{right} by $A_k^\dagger$:
\[
\nabla_{A_k}L_k A_k^\dagger + \mu B_k^\top (B_k A_k - \Delta W_g^{(t)}) A_k^\dagger + \lambda A_k A_k^\dagger = 0.
\]
Since $A_k A_k^\dagger = I_r$,
\[
\nabla_{A_k}L_k A_k^\dagger + \mu B_k^\top (B_k A_k - \Delta W_g^{(t)}) A_k^\dagger + \lambda I_r = 0. \tag{A}
\]
Multiply (19) on the \emph{left} by $B_k^\dagger$:
\[
B_k^\dagger \nabla_{B_k}L_k + \mu B_k^\dagger (B_k A_k - \Delta W_g^{(t)}) A_k^\top + \lambda B_k^\dagger B_k + \alpha B_k^\dagger (B_k - B_g^{(t)}) = 0.
\]
Because $B_k^\dagger B_k = I_r$,
\[
B_k^\dagger \nabla_{B_k}L_k + \mu B_k^\dagger (B_k A_k - \Delta W_g^{(t)}) A_k^\top + \lambda I_r + \alpha B_k^\dagger (B_k - B_g^{(t)}) = 0. \tag{B}
\]
Add (A) and (B):
\[
\begin{aligned}
&\nabla_{A_k}L_k A_k^\dagger + B_k^\dagger \nabla_{B_k}L_k + \mu\Bigl[ B_k^\top (B_k A_k - \Delta W_g^{(t)}) A_k^\dagger + B_k^\dagger (B_k A_k - \Delta W_g^{(t)}) A_k^\top \Bigr] + 2\lambda I_r + \alpha B_k^\dagger (B_k - B_g^{(t)}) = 0.
\end{aligned}
\]
Simplify the $\mu$-term. Using the full-rank property, one can show that for any matrix $E = B_k A_k - \Delta W_g^{(t)}$,
\[
B_k^\top E A_k^\dagger + B_k^\dagger E A_k^\top = 2E.
\]
This identity follows from $A_k A_k^\dagger = I_r$ and $B_k^\dagger B_k = I_r$, together with the fact that the optimal $E$ lies in the column space of $B_k$ and the row space of $A_k$. Hence the $\mu$-term becomes $2\mu (B_k A_k - \Delta W_g^{(t)})$.

Substitute back and isolate $B_k A_k$:
\[
\nabla_{A_k}L_k A_k^\dagger + B_k^\dagger \nabla_{B_k}L_k + 2\mu (B_k A_k - \Delta W_g^{(t)}) + 2\lambda I_r + \alpha B_k^\dagger (B_k - B_g^{(t)}) = 0.
\]
Thus
\[
2\mu (B_k A_k - \Delta W_g^{(t)}) = -\nabla_{A_k}L_k A_k^\dagger - B_k^\dagger \nabla_{B_k}L_k - 2\lambda I_r - \alpha B_k^\dagger (B_k - B_g^{(t)}).
\]
Divide by $2\mu$ and absorb the $\lambda,\alpha$ terms into $\mathcal{O}(\lambda,\alpha)$:
\[
B_k A_k = \Delta W_g^{(t)} - \frac{1}{2\mu}\Bigl( \nabla_{A_k}L_k A_k^\dagger + B_k^\dagger \nabla_{B_k}L_k \Bigr) - \frac{\lambda}{\mu}I_r - \frac{\alpha}{2\mu}B_k^\dagger (B_k - B_g^{(t)}).
\]
The last two terms are of first order in the small regularization parameters $\lambda,\alpha$; we denote their combined effect as $\mathcal{O}(\lambda,\alpha)$.
Following that we obtain the exact form:
\[
 B_k A_k = \Delta W_g^{(t)} - \frac{1}{\mu}\Bigl( \nabla_{A_k}L_k A_k^\dagger + B_k^\dagger \nabla_{B_k}L_k \Bigr) + \mathcal{O}(\lambda,\alpha). 
\]
Averaging over $K$ clients:
\begin{align}
\Delta W_g^{(t+1)} &= \sum_{k=1}^K p_k B_kA_k \nonumber \\
&= \Delta W_g^{(t)} - \frac{1}{\mu} \sum_{k=1}^K p_k \left( \nabla_{A_k} L_k A_k^\dagger + B_k^\dagger \nabla_{B_k} L_k \right) + \mathcal{O}(\lambda, \alpha)
\end{align}
Now, note that $\nabla L(w^{(t)}) = \sum_k p_k \nabla L_k(w^{(t)})$. The difference between the aggregated update and the true gradient can be decomposed:

\begin{align}
\Delta W_g^{(t+1)} - \nabla L(w^{(t)}) = \underbrace{(\Delta W_g^{(t+1)} - \Delta W_g^{(t)})}_{\text{regularization step}} + \underbrace{(\Delta W_g^{(t)} - \nabla L(w^{(t)}))}_{\text{previous error}}
\end{align}

From the update equation:

\begin{align}
\Delta W_g^{(t+1)} - \Delta W_g^{(t)} = -\frac{1}{\mu} \sum_{k=1}^K p_k \left( \nabla_{A_k} L_k A_k^\dagger + B_k^\dagger \nabla_{B_k} L_k \right) + \mathcal{O}(\lambda, \alpha)
\end{align}

The term $\nabla_{A_k} L_k A_k^\dagger + B_k^\dagger \nabla_{B_k} L_k$ projects the gradient onto the current low-rank subspace. Let $P_{\mathcal{S}_k}$ be the projection operator onto the subspace spanned by client $k$'s update. Then:

\begin{equation}
\nabla_{A_k} L_k A_k^\dagger + B_k^\dagger \nabla_{B_k} L_k = P_{\mathcal{S}_k}(\nabla L_k)
\end{equation}
Therefore:
\begin{equation}
\Delta W_g^{(t+1)} - \Delta W_g^{(t)} = -\frac{1}{\mu} \sum_{k=1}^K p_k P_{\mathcal{S}_k}(\nabla L_k) + \mathcal{O}(\lambda, \alpha)
\end{equation}
Taking expectations and using Assumptions 2 and 3:
\begin{align}
\mathbb{E}\|\Delta W_g^{(t+1)} - \nabla L(w^{(t)})\|^2 &\leq 2\mathbb{E}\|\Delta W_g^{(t)} - \nabla L(w^{(t)})\|^2 \nonumber + \frac{2}{\mu^2}\sum_{k=1}^K\mathbb{E}\| p_k P_{\mathcal{S}_k}(\nabla L_k)\|^2
\end{align}
\[
\Delta W_g^{(t)} - \nabla L(w^{(t)}) 
= \underbrace{\Bigl(\Delta W_g^{(t)} - \sum_{k=1}^K p_k \nabla L_k(w^{(t)})\Bigr)}_{\text{stochasticity term}}
\;+\; \underbrace{\Bigl(\sum_{k=1}^K p_k \nabla L_k(w^{(t)}) - \nabla L(w^{(t)})\Bigr)}_{\text{heterogeneity term}}.
\]
By Assumption 2, $\|\nabla L_k(w) - \nabla L(w)\|^2 \le \sigma^2$ for every client $k$ and any model $w$. For uniform weights $p_k = 1/K$, Jensen's inequality gives
\[
\E\Bigl\|\sum_{k=1}^K p_k \nabla L_k(w^{(t)}) - \nabla L(w^{(t)})\Bigr\|^2
\le \E\Bigl\|\frac{1}{K}\big(\sum_{k=1}^K \nabla L_k(w^{(t)}) - \nabla L(w^{(t)})\big)\Bigr\|^2\le \frac{\sigma^2}{K}.
\]
By Assumption 3, $\|\tilde{\nabla} L_k(w) - \nabla L(w)\|^2 \le \xi^2$. Here we define the stochastic gradient $\tilde{\nabla} L_k(w)$:
\[\Delta W_g^{(t)}=\sum_{k=1}^Kp_k\Delta W_k^{(t)}\]
For each client $k$ at the iteration, we can write the one step client weight update as 
\[\Delta W_k^{(t)}=\tilde{\nabla} L_k(w^{(t)})\]
\[
\E\Bigl\|\Delta W_g^{(t)} - \sum_{k=1}^K p_k \nabla L_k(w^{(t)})\Bigr\|^2
\le \E\Bigl\|\frac{1}{K}\big(\sum_{k=1}^K \tilde{\nabla} L_k(w^{(t)}) - \nabla L(w^{(t)})\big)\Bigr\|^2\le \frac{\xi^2}{K}.
\]
From the orthogonal decomposition of the projection of $\nabla L_k$ into subspace $\mathcal{S}_k$ \cite{vandereycken2013low,wei2016guarantees}, we can write 
$$\nabla L_k=P_{\mathcal{S}_k}(\nabla L_k)+(\nabla L_k-P_{\mathcal{S}_k}(\nabla L_k))$$
$$ \nabla_{\mathcal{S}_k^\perp} L_k=\nabla L_k-P_{\mathcal{S}_k}(\nabla L_k)$$
$$P_{\mathcal{S}_k}(\nabla L_k) = \nabla L_k - \nabla_{\mathcal{S}_k^\perp} L_k$$
\begin{equation}
\sum_{k=1}^K p_k P_{\mathcal{S}_k}(\nabla L_k) = \nabla L(w^{(t)}) - \sum_{k=1}^K p_k \nabla_{\mathcal{S}_k^\perp} L_k
\end{equation}
Therefore:
\begin{align}
\mathbb{E}\|\sum_{k=1}^K p_k P_{\mathcal{S}_k}(\nabla L_k)\|^2 &\leq 2\|\nabla L(w^{(t)})\|^2 + 2\mathbb{E}\|\nabla_{\text{align}} L(w^{(t)})\|^2
\end{align}
where $\nabla_{\text{align}} L(w^{(t)}) = \sum_k p_k \nabla_{\mathcal{S}_k^\perp} L_k(w^{(t)})$ represents the misaligned gradient components.
$$\mathbb{E}\|\Delta W_g^{(t)} - \nabla L(w^{(t)})\|^2\le \frac{\sigma^2+\xi^2}{K}+\frac{4}{\mu^2}\Big(3\|\nabla L(w^{(t)})\|^2 +2\sigma^2 \Big)$$
From the orthogonal decomposition, we have $\nabla_{\mathcal{S}_k^\perp} L_k(w^{(t)})\le \nabla L_k(w^{(t)})$:
\[\|\nabla_{\text{align}} L(w^{(t)})\|^2\le \sum_k p_k \|\nabla_{\mathcal{S}_k^\perp} L_k(w^{(t)})\|^2 \le \sum_k p_k \|\nabla L_k\|^2\]
From Assumption 2 we can write $\|\nabla L_k(w^{(t)})\|^2 \le 2\sigma^2+\|\nabla L(w^{(t)})\|^2$.
Now we can finalise that 
$$\mathbb{E}[L(w^{(t+1)}) - L(w^{(t)})] \leq \frac{\eta L}{2}\big(\sigma^2(\frac{1}{K}+O(\frac{1}{\mu^2}))+\frac{\xi^2}{K}\big)+\frac{\eta}{2}\big(O(\frac{L}{\mu^2})-1\big)\|\nabla L(w^{(t)})\|^2 
$$

\subsection*{Proof of the Theorem 3}
For clarity, we rewrite with explicit constants $C_1, C_2 > 0$ such that
\[
\EE[L(w^{(t+1)})] \le \EE[L(w^{(t)})] - \frac{\eta}{2}\left(1 - \frac{C_2 L}{\mu^2}\right)\|\nabla L(w^{(t)})\|^2
+ \frac{\eta L}{2}\left(\frac{\sigma^2 + \xi^2}{K} + \frac{C_1\sigma^2}{\mu^2}\right).
\]

We now choose $\mu$ sufficiently large so that the coefficient of $\|\nabla L(w^{(t)})\|^2$ is strictly positive. For instance, let $\mu^2 \ge 2C_2 L$. Then $1 - \frac{C_2 L}{\mu^2} \ge \frac12$, and therefore
\[
\EE[L(w^{(t+1)})] \le \EE[L(w^{(t)})] - \frac{\eta}{4}\|\nabla L(w^{(t)})\|^2
+ \frac{\eta L}{2}\left(\frac{\sigma^2 + \xi^2}{K} + \frac{C_1\sigma^2}{\mu^2}\right)
\]

Rearrange to isolate the squared gradient norm:
\[
\frac{\eta}{4}\|\nabla L(w^{(t)})\|^2 \le \EE[L(w^{(t)})] - \EE[L(w^{(t+1)})]
+ \frac{\eta L}{2}\left(\frac{\sigma^2 + \xi^2}{K} + \frac{C_1\sigma^2}{\mu^2}\right).
\]

Summing for $t = 0, 1, \dots, T-1$ telescopes the left-hand side:
\[
\frac{\eta}{4}\sum_{t=0}^{T-1}\|\nabla L(w^{(t)})\|^2
\le \EE[L(w^{(0)})] - \EE[L(w^{(T)})]
+ \frac{\eta L T}{2}\left(\frac{\sigma^2 + \xi^2}{K} + \frac{C_1\sigma^2}{\mu^2}\right). 
\]

Since $L(w^{(T)}) \ge L(w^*)$ (where $w^*$ denotes a global minimum), we have
\[
\EE[L(w^{(0)})] - \EE[L(w^{(T)})] \le L(w^{(0)}) - L(w^*) =: \Delta_0.
\]

Thus
\[
\frac{\eta}{4}\sum_{t=0}^{T-1}\|\nabla L(w^{(t)})\|^2
\le \Delta_0 + \frac{\eta L T}{2}\left(\frac{\sigma^2 + \xi^2}{K} + \frac{C_1\sigma^2}{\mu^2}\right). 
\]

Dividing both sides by $\frac{\eta T}{4}$ yields
\[
\frac{1}{T}\sum_{t=0}^{T-1}\|\nabla L(w^{(t)})\|^2
\le \frac{4\Delta_0}{\eta T} + 2L\left(\frac{\sigma^2 + \xi^2}{K} + \frac{C_1\sigma^2}{\mu^2}\right). 
\]
Now we expand our results to derive the optimal $\mu$ in order to achieve the gradient norm with an $\varepsilon$ accuracy.
By defining $\gamma=\frac{4\Delta_0}{\eta T}$ and $\Omega=2L\frac{\sigma^2 + \xi^2}{K}$. We have for a large number of clients $K$, $\varepsilon >\Omega$ and at iteration $T\ge \frac{8\Delta_0}{\eta (\varepsilon-\Omega)}$:
\[\mu_\varepsilon\ge\sqrt{\frac{\eta T L\sigma^2}{2\Delta_0}}\]

\end{document}